\algnewcommand{\Input}[1]{%
	\State \textbf{Input:}
	\Statex \hspace*{\algorithmicindent}\parbox[t]{.8\linewidth}{\raggedright #1}
}
\algnewcommand{\Output}[1]{%
	\State \textbf{Output:}
	\Statex \hspace*{\algorithmicindent}\parbox[t]{.8\linewidth}{\raggedright #1}
}
\definecolor{myBrown}{RGB}{139,69,19}
\definecolor{myYellow}{RGB}{244,160,0}
\definecolor{greenlayer}{RGB}{147, 180, 63}
\definecolor{lmuGruen}{RGB}{63, 180, 136}
\newtheorem{lemma}{Lemma}
\newtheorem{theo}{Theorem}
\title{MMGAN: Generative Adversarial Networks for Multi-Modal Distributions }
\author{Teodora Pandeva and Matthias Schubert\\
Institute for Informatics\\
LMU Munich\\
Teodora.Pandeva@campus.lmu.de and schubert@dbs.ifi.lmu.de}
\begin{document}

\maketitle

\begin{abstract}
Over the past years, Generative Adversarial Networks (GANs) have shown a remarkable generation performance especially in image synthesis. Unfortunately, they are also known for having an unstable training process and might loose parts of the data distribution for heterogeneous input data. In this paper, we propose a novel GAN extension for multi-modal distribution learning (MMGAN).  In our approach, we model the latent space as a Gaussian mixture model with a number of clusters referring to the number of disconnected data manifolds in the observation space, and include a clustering network, which relates each data manifold to one Gaussian cluster. Thus, the training gets more stable. Moreover, MMGAN allows for clustering real data according to the learned data manifold in the latent space. By a series of benchmark experiments, we illustrate that MMGAN outperforms competitive state-of-the-art models in terms of clustering performance.
\end{abstract}

\section{Introduction}
 Generative Adversarial Nets (GANs) \cite{goodfellow2014generative} are state-of-the-art deep generative models and, therefore, they are primarily designed to model data distributions. 
 Compared to other generative models, GANs gain distinction in generating higher quality data.  Despite their notable success, GANs still suffer from unsolved problems and thus, there is ongoing research to further improve their performance and make training more stable. For instance, GAN implicit nature  does not allow to apply inference learning on the latent space. Although many methods exist which deal with this shortcoming most of them lack interpretability of the estimated posterior distribution. Moreover, GANs model the latent space as a simple unimodal distributions, ignoring the often more complicated implicit structure of the learned data distribution. However, for many data sets, a union of disjoint manifolds (or clusters) fits more naturally to the implicit structure of the input data. For example, digit data can be interpreted as samples from a disjoint union of $10$ manifolds -- one for each digit. \cite{Khayatkhoei18Disconnected} showed that the quality of generated data suffers from the generator attempting to cover all data manifolds in the data  space with a single manifold in the latent space. Hence, this can lead to mode dropping, i.e. one or more submanifolds of the real data are not covered by the generator. It has been proven that GAN's local convergence can be sustained when the real and fake data distribution are near achieving a Nash equilibrium \cite{nagarajan2017gradient}. Thus, ignoring the multi-modal nature of a data set might lead to oscillating generator parameters without converging to the real distribution.
  
  In this paper, we introduce GANs for learning multi-modal distributions. The resulting architecture, which is named MMGAN, adopts a dynamic disconnected structure of the latent space, which is distributed according to a Gaussian mixture model. More precisely, by introducing an extra network into the GAN structure, the resulting framework aims to find a disconnected data representation in the latent space, such that each data mode or cluster in the observation space is related to a single cluster in the latent space. This stabilizes the training process and yields a better data representation. Furthermore, we can do inference on the real data to predict the most likely cluster in the latent space and thus, categorize the data with respect to its implicit structure. We provide an universal approximation theorem assuring the existence of a generator with the MMGAN functionality in the spirit of \cite{Cybenko89,Hornik91}.
  
  
\section{Related Work}
\label{sec:related}
 There is a great variety of GAN architectures, which explore the latent space abilities to produce realistic data. Most of them can be referred to as hybrid VAE-GAN methods, which bridge the gap between Variational Autoencoders (VAEs) \cite{kingma2013auto} and GANs. All of them use a third encoder network which maps a data object to an instance from the latent space $\mathcal{Z}$. For example,  \citeauthor{makhzani2015aae} proposed the Adversarial Autoencoder (AAE), which is an autoencoder for performing inference.  AAE is composed of three networks: encoder, decoder and discriminator. The latter is trained to correctly classify an encoded noise  from the prior noise, which is an arbitrary noise distribution. Although this model can be extended to learn a discrete data representation in an unsupervised learning fashion, it does not consider the true data distribution. Another VAE-GAN  hybrid is ClusterGAN \cite{Mukherjee2018clustergan}, which is essentially InfoGAN \cite{chen2016infogan} followed by k-means post-clustering on the encoded latent codes.  Although InfoGAN has shown remarkable generation and clustering performance by semantically disentangling the latent space, we argue that  the encodings are not suitable for $k$-means clustering, which tends to discover spherical patterns in the data. 
 
 A further approach for gaining more insights into the structure of the noise $\mathcal{Z}$, is to directly model the latent space by imposing some assumptions about the prior distribution $\mathbb{P}_z.$ For instance, GM-GAN \cite{yosef2018gmgan} and DeLiGAN \cite{gurumurthy2017deligan} adopt a Gaussian mixture model for the latent space distribution, where the means and standard deviations are learnable parameters. However, these models do not provide any direct inference framework and any interpretation of the learned latent components.
 
The Gaussian Mixture VAE (GMVAE) is adapted for unsupervised clustering tasks as the latent space $\mathcal{Z}$ attains the form of Gaussian mixture model \cite{DilokthanakulMG16}. Thus, it becomes the explicit counterpart of MMGAN. However, based on the VAE framework, GMVAE has shown some shortcomings, including the VAE's tendency to produce blurry images and the (strong) restriction of the encoder output distribution.
  
\section{Generative Adversarial Networks}
\label{sec:GAN}
GANs consist of two networks which are opposed to one another in a  game \cite{goodfellow2014generative}. The first one, $G$, is a \textbf{generator}, which  captures the data distribution and tries to produce realistic data. It receives as input noise data, sampled from the latent space $\mathcal{Z}\subset \mathbb{R}^d$ with dimension $d$, which is deterministically mapped  to a data point from the observation space $\mathcal{X}\subset \mathbb{R}^p$,  where $p\geq d$.  The second player, $D$, is called \textbf{discriminator}. It measures how realistic the input data is, i.e. for some $x\in\mathcal{X},$ $D(x)$ is in general a score, e.g. the probability, measuring whether $x$ comes from the real distribution. Thus, $D$ is trained via  supervised learning on data with assigned labels $1$ for being real and $0$ for being fake and tries to correctly classify a real object from a fake one. In contrast, $G$ aims to fool the discriminator by producing data resembling the real data as close as possible.

In this setting, the models $D$ and $G$ are neural networks with fixed structures. Hence, the learning takes place over the networks parameters, which are denoted by  $\theta_D\in\Theta_D$ and $\theta_G\in\Theta_G$, respectively, where $\Theta_D$ and $\Theta_G$ are real spaces with dimension depending on the networks architectures.  For simplicity, throughout this work, we use $G(z)$ and $D(x)$ instead of $G(z;\theta_G)$ and $D(x;\theta_D),$ respectively, unless it is explicitly mentioned. Here, the unknown true data distribution is denoted by $\mathbb{P}_{r},$   defined on $\mathcal{X}$,  and $\mathbb{P}_z$ is the input noise distribution, defined on $\mathcal{Z}$. Given a noise instance $z$, the generator produces a fake data point $x_f=G(z),$ which is a sample of the unspecified distribution, induced by $\mathbb{P}_f$, which is the implicit approximation of $\mathbb{P}_r$ and gives the methodology how $\mathcal{Z}$ is related to $\mathcal{X}$.  

The standard GAN optimization problem (SGAN) \cite{goodfellow2014generative} is defined by
\begin{equation*}
\begin{split}
\hat{\theta}_D,\hat{\theta}_G&=\arg\min_{\theta_D}\arg\max_{\theta_G}V(D,G) \text{ with }\\
V(D,G)&=-\mathbb{E}_{x_r \sim \mathbb{P}_{r}}\big[\log(D(x_r))\big]\\&-\mathbb{E}_{x_f \sim \mathbb{P}_f}\big[\log(1-D(x_f))\big].
\end{split}
\end{equation*}
 \citeauthor{jolicoeur2018relativistic} gives a theoretical and empirical analysis of the SGAN training behavior which contradicts the theoretical results, derived by \cite{goodfellow2014generative}, i.e. the probability of real data being real should decrease during training, while the probability of fake data being fake should increase, which is not fulfilled by SGAN. To excel the training stability, \textit{relativistic} objective functions are proposed \cite{jolicoeur2018relativistic}. A class representative is the RSGAN, defined in the initial paper. The corresponding optimization problem is given by
 \begin{equation*}
 \begin{split}
 \hat{\theta}_D&= \arg\min_{\theta_D}- \mathbb{E}_{x_r\sim\mathbb{P}_{r},x_f\sim\mathbb{P}_f}[\log(s (C(x_r)-C(x_f))]\\
 \hat{\theta}_G&= \arg\min_{\theta_G}- \mathbb{E}_{x_r\sim\mathbb{P}_{r},x_f\sim\mathbb{P}_f}[\log(s (C(x_f)-C(x_r))],
 \end{split}
 \end{equation*}
 where $C(\cdot)$ is the critic of the discriminator, which is defined by  the non-transformed output of $D$  \cite{arjovsky2017wasserstein}, i.e. $C(x)=\text{logit}(D(x))$, and $s:\mathbb{R}\cup \{-\infty,+\infty\}\rightarrow [0,1],\, s(x)=(1+\exp(-x))^{-1}$ is the sigmoid function. This is equivalent to the negative expected probability that a real data object is more realistic than a fake one. A further example of the relativistic approach is the relativistic average GAN (RaSGAN), which compares a real object to the average fake one and vice versa. We provide a modified version of RaSGAN in the next section. Notably, this family of objective functions allows a direct comparison between pairs of fake and real data objects. This is a key feature which is utilized in the MMGAN structure.

\section{Multi-Modal GANs (MMGAN)}
\label{sec:mmgan}
MMGAN samples noise from a Gaussian mixture model. In particular, we sample a  cluster from a cluster distribution $\operatorname{Cat}\Big(K,\frac{1}{K}\Big)$ and then, draw the noise $z$ from the Gaussian corresponding to the sampled cluster. As any GAN, MMGAN takes $z$ as input to a generator ($G$) and employs a discriminator ($D$) to guide $G$ to generate realistic objects. In addition, MMGAN employs an encoder network ($E$) to predict the cluster of data objects. The output is a probability distribution over the clusters, which is denoted by $p_E(\cdot\vert x)$ for $x\in\mathcal{X}$. Thus, the encoder $E$ should reproduce the cluster from which a fake instance was sampled from, and predict the most likely cluster for real images. An overview of the architecture is depicted in Figure \ref{fig:mmgan}.

\begin{figure}
	\centering
\includegraphics[scale=0.7]{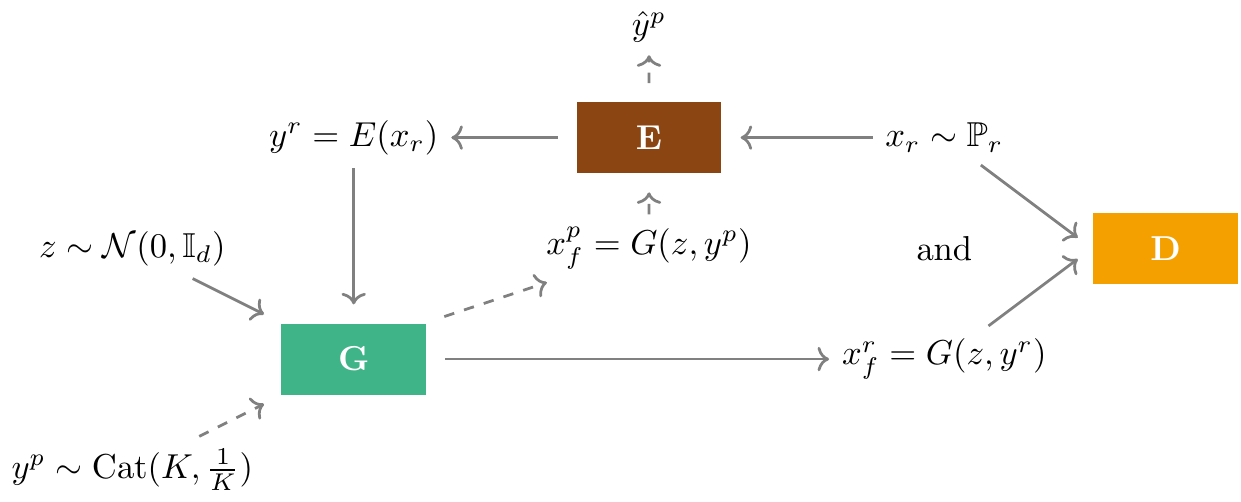}
	\caption[MMGAN Architecture]{MMGAN architecture consists of three networks: generator, discriminator and encoder. There are two training cycles. The solid lines show the data flow for building  pairs between a real and a fake sample. The discriminator $D$ is trained on the formed pairings (batch $B1$). The dashed lines follow the generation of prior data from a randomly sampled cluster (batch $B2$). Thus, the encoder $E$ learns to correctly classify an $x_f^p$ with a true label $y^p.$ }
	\label{fig:mmgan}
\end{figure}

In the following, we will describe our architecture in more detail. To force MMGAN to cluster data, we model the latent space as a mixture of Gaussians with a uniform prior over the clusters. Thus, our goal is to find a representation of each cluster in terms of mean and covariance. We restrict the covariance matrix to have equal diagonal entries and $0$ everywhere else, i.e. to be of the form $\sigma^2 \mathbb{I}_d$  where $\sigma \in\mathbb{R}$. Therefore, we define the pairing $(\mu_k,\sigma_k)$ with $\mu_k\in \mathbb{R}^d$ and $\sigma_k\in\mathbb{R}$ to be the mean and standard deviation of the $k$th cluster, where for all $k \in\{0,\ldots,K\}$,  $(\mu_k,\sigma_k)$ are learnable parameters. Both parameters are represented by a $K\times d$ and $K\times 1$ dense layers located right before the core generator, denoted by $\mu$ and $\sigma$, respectively. Both networks receive a one hot encoded cluster as input, i.e. of the form $y=(0,\ldots,1,\ldots, 0),$ and output $\mu_k:=\mu(y)$ and $\sigma_k:=\sigma(y)$ for the $k^{th}$ entry of $y$ being $1$, i.e. $y_k=1$. Thus, for the cluster related noise $\tilde{z}$,  we obtain $\tilde{z}=\mu(y)+\sigma(y)z$ using the reparameterization trick \cite{kingma2013auto}.  Afterwards $\tilde{z}$ is fed into the core generator. The exact procedure is illustrated in Figure \ref{fig:gen}.
To formalize the whole procedure, we describe the generative model $\mathbb{P}_f$ as:
\begin{align*}
\begin{split}
y&\sim \operatorname{Cat}\Big(K,\frac{1}{K}\Big),\\
\tilde{z}\vert y &\sim  \mathcal{N}(\mu(y),\sigma(y)^2\mathbb{I}_d),\\
x_f&=G(\tilde{z}),
\end{split}
\end{align*}

	Doing inference on the latent space can be done by directly computing the posterior $p(z\vert x)$ for $z\in\mathcal{Z}$ and $x\in\mathcal{X}$, i.e.
	\begin{align*}
	p(z\vert x)=\sum_{k=1}^K p_E(y\vert x)\mathcal{N}(z\vert\mu_k,\sigma_k^2\mathbb{I}_d),
	\end{align*}
	where $\mathcal{N}(z\vert\mu_k,\sigma_k^2\mathbb{I}_d)$ denotes the probability distribution function of $\mathcal{N}(\mu_k,\sigma_k^2\mathbb{I}_d)$.

\begin{figure}[t]
	\centering
\includegraphics[scale=0.7]{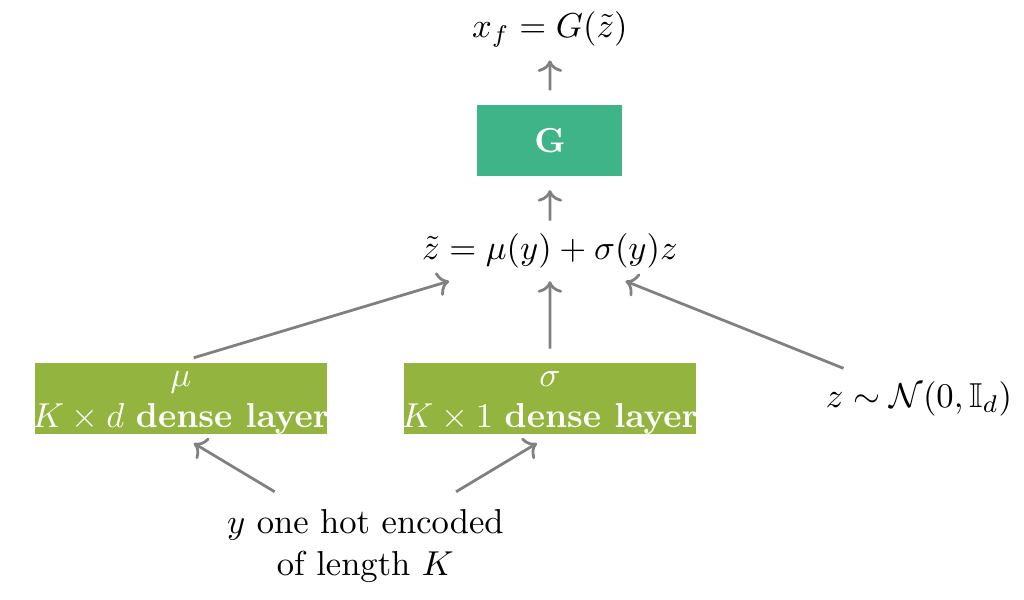}
	\caption[Generator Architecture]{The generator is extended by two layers representing the cluster mean and standard deviation. Their input is the cluster which is one hot encoded. Hence, the noise $z$ is transformed into a Gaussian variable with mean $\mu(y)$ and $\sigma(y)^2\mathbb{I}_d$ variance. }
	\label{fig:gen}
\end{figure}	

\subsection{MMGAN Training}
To explain the training of MMGANs, we will start with the generation of training batches. Firstly, we sample real objects $x_r$ from the unknown distribution $\mathbb{P}_r.$ These are fed into the encoder $E$, such that the resulting output is transformed into a cluster $y^r$, which refers to a Gaussian cluster in the mixture model. The encoded $y^r$ together with randomly sampled standard Gaussian noise serve as an input to the generator $G$. Thus, fake objects $x_f^r$ are produced from the Gaussian cluster, corresponding to the sampled real data $x_r$. The resulting pairings $(x_r,x_f)$ are fed into the discriminator $D$.  We argue that this pairing system will excel both the data generation process and the clustering performance. We will refer to this training batch as \textbf{B1}.

In addition, to train the encoder $E$ to correctly assign clusters to fake objects, we generate a second type of training batch, \textbf{B2}, which is composed of fake observations $x_f^p$ labeled by their corresponding clusters $y^p$. For this batch, the clusters $y^p$ are randomly drawn samples from the categorical distribution $\operatorname{Cat}(K,\frac{1}{K})$. 

\begin{algorithm}[t]
	\caption{MMGAN}\label{supgmgan}
	\begin{algorithmic}[1]
		\Input{$K$: number of clusters\\
			\textit{train\_iter}: number of training iterations\\
			$m$: batch size\\
			$d$: noise dimension\\
			$\alpha$: hyperparameter\\
		}   \vspace{0.3cm}
		\State Initialize $\theta_D^0,\theta_G^0,\theta_E^0$.
		\For {$t=1$ to  \textit{train\_iter}}
		\State Sample $x_r$ from data of size $m$.
		\State $y=E(x_r;\theta_E^{t-1})$ 
		\State Sample $y^p\sim \text{Cat}(K,\frac{1}{K})$ of size $m$.
		\State Sample $z\sim \mathcal{N}(0,\mathbb{I}_d)$ of size $m$.
		\State $x_f=G(z,y;\theta_G^{t-1})$ 
		\State $x_f^p=G(z,y^p;\theta_G^{t-1})$  
		\State $l_D^{t-1}=-\frac{1}{m} \Big(\sum_{i=1}^{m}\nabla_{\theta_D}\log(s(C(x_r^{(i)};{\theta_D^{t-1}})-C(x_f^{(i)};{\theta_D^{t-1}})))\Big)$
		\State $l_{G,E}^{t-1}= -\frac{1}{m} \Big(\sum_{i=1}^{m}\nabla_{(\theta_G,\theta_E)}\log(s(C(x_f^{(i)})-C(x_r^{(i)})))+\alpha\log p_E(y^{p(i)}\vert x_f^{p(i)})\Big)$
		\State Update $\theta_D^{t-1}$ by Adam  with gradient $l_D^{t-1}$.
		\State Update $(\theta_G^{t-1},	\theta_E^{t-1})$ by Adam with gradient $ l_{G,E}^{t-1}$.
		\EndFor
		\Output{$D,G,E$}
	\end{algorithmic}
	
\end{algorithm}

The exact optimization problem for training MMGANs is given as follows:

\begin{align*}
\begin{split}
\min_{\theta_D} \max_{\theta_G,\theta_E}& V(D,G,E)\\
&+\alpha\mathbb{E}_{y\sim \text{Cat}(K,\frac{1}{K}),z\sim \mathcal{N}(0,\mathbb{I}_d),x_f =G(z,y)}[\log p_E(y\vert x_f)],
\end{split}
\end{align*}
where $ V(D,G,E)$ refers to an adversarial loss, depending on the three nets, and $p_E(\cdot\vert x)$ is the output encoder posterior for given observation $x\in\mathcal{X}$. Note that $V(D,G,E)$ is trained on the first type of batch \textbf{B1} and thus, depends on $E$ to generate fake instances $x_f$.
The second term is the cross entropy loss for the encoder output, weighted by a hyperparameter $\alpha>0$. This term is trained on the second type of batch \textbf{B2} to make sure that each cluster is sufficiently represented.

The training steps are shown in Algorithm \ref{supgmgan}. To lay emphasis on the dependence of $V(D,G,E)$ on the parameters $\theta_D,\theta_G$ and $\theta_E$, in Algorithm \ref{supgmgan}, the notation  $D(\cdot;\theta_D),\, G(\cdot;\theta_G)$  and $E(\cdot;\theta_E)$ is used. Here, the chosen adversarial loss is RSGAN.  Moreover, we use Adam \cite{kingmaB14adam} for parameter learning (see lines $12$--$13$ of Algorithm \ref{supgmgan}).

We choose $V(D,G,E)$ to be a relativistic objective since we aim to measure similarity between real and fake objects, belonging to the same cluster. If we assume that $E$ clusters data objects in a meaningful way, we expect that the discriminator will find it more difficult to classify a real object $x_r$ from a fake one $x_f$ from the same mode. Thus, we argue that the discriminator will not reach optimality very fast, which will lead to a more stable GAN training behavior \cite{arjovsky2017manifolds}.

In addition to using standard RSGAN loss, we propose an extension of the RaSGAN \cite{jolicoeur2018relativistic}, which is a cluster-wise comparison between a real data object and the average fake one or vice versa, i.e. $\hat{\theta}_D,\hat{\theta}_G$ and $\hat{\theta}_E$ are optimal solutions of the optimization problems
	\begin{align*}
	\begin{split}
	\hat{\theta}_D=\min_{\theta_D}& -\mathbb{E}_{x_r \sim \mathbb{P}_r}[\log \hat{ D}(x_r)]\\& - \mathbb{E}_{ z\sim\mathbb{P}_z,y\sim\operatorname{Cat}(K,\frac{1}{K})}[\log(1-\hat{D}(G(z,y)))],\\
	\hat{\theta}_G,\hat{\theta}_E=\min_{\theta_G,\theta_E} - &\mathbb{E}_{ z\sim\mathbb{P}_z,y\sim\operatorname{Cat}(K,\frac{1}{K})}[\log \hat{ D}(G(z,y))]\\& - \mathbb{E}_{x_r \sim \mathbb{P}_r}[\log(1-\hat{D}(x_r))],
	\end{split}
	\end{align*}
	where
	\begin{align*}
	&\hat{D}(x_r)= s(C(x_r)-\mathbb{E}_{z\sim\mathbb{P}_z}[C(G(z,E(x_r)))])\\
	&\hat{D}(G(z,y))=s(C(G(z,y))-\mathbb{E}_{x_r\sim\mathbb{P}_r}[\delta_{E(x_r)}(y)\cdot C(x_r)]),
	\end{align*} 
	and  $\delta_{E(x_r)}(y)$ is the Dirac delta function with $\delta_{E(x_r)}(y)=1$ for $E$ assigning the highest probability of $x_r$ being in cluster $y$, and $0$ otherwise. We name the resulting objective conditional RaSGAN (cRaSGAN).

\subsection{Universal Approximation Theorem for the Latent Space Assumption}

In the following, we will guarantee the existence of a fully connected neural network that maps a collection of Gaussians to the disjoint data space such that the resulting network recovers the initial data distribution up to a constant $\epsilon$. This result is closely related to the Universal Approximation Theorem of \cite{Cybenko89,Hornik91}. A similar theory for the uniform distribution has been recently developed in the work  of \cite{khrukov2019universality}. 


Here, we are interested in \textit{smooth} (infinitely differentiable) functions $f$, which surjectively map the support of a Gaussian distribution to a $d$-connected manifold, as defined in \cite[ Definition $1.4.1$]{jost2008riemannian}. In real life applications, we choose the dimensionality of the latent space $\mathcal{Z}\subseteq\mathbb{R}^d$ to be very large, due to the high dimensionality of the observation space $\mathcal{X}.$ The Gaussian Annulus theorem (see \cite[Theorem $2.9$]{blum2015foundations}) suggests that for a large enough $d$, the mass of a Gaussian with zero mean and identity matrix as covariance is concentrated around the periphery of a a ball with radius $\sqrt{d-1}$ and, thus, it approximates the sphere  $\sqrt{d-1}\cdot \mathbb{S}^{d-1}=\{x\in\mathbb{R}^d\vert \Vert x\Vert_2^2=d-1\}$.  The theory developed below is adapted to high dimensional spherical latent spaces, since these can be easily extended to high dimensional Gaussians.

\begin{lemma}
	\label{lem:mapsphere}
	Let $\mathcal{M}\subset\mathbb{R}^p$ for $p\geq d$ be a compact connected $d$-dimensional manifold. Then there exists a
	smooth map
	\begin{align*}
	f:\mathbb{S}^{d}\rightarrow \mathbb{R}^p,
	\end{align*}
	such that $f(\mathbb{S}^{d}) = \mathcal{M},$ where  $\mathbb{S}^{d}=\{x\in\mathbb{R}^{d+1}\vert \Vert x\Vert_2=1\}$.
\end{lemma}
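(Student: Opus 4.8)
The plan is to produce a smooth \emph{surjection} $f\colon\mathbb{S}^{d}\to\mathcal{M}$ and then read it off as a map into $\mathbb{R}^p$ via the smooth inclusion $\mathcal{M}\hookrightarrow\mathbb{R}^p$. First I would use compactness to cover $\mathcal{M}$ by finitely many coordinate balls: around each point $q\in\mathcal{M}$ pick a smooth open embedding (an inverse chart) $\varphi\colon V\to\mathcal{M}$ with $V\subseteq\mathbb{R}^d$ open, $\varphi(0)=q$, and $V$ containing the closed ball of radius $2$ about $0$; the sets $\varphi(\{\lvert v\rvert<1\})$ then form an open cover of $\mathcal{M}$, so finitely many of them suffice, say coming from $\varphi_1,\dots,\varphi_N$, and I write $\bar B_i:=\varphi_i(\mathbb{D}^d)$, where $\mathbb{D}^d\subset\mathbb{R}^d$ is the closed unit ball, and $c_i:=\varphi_i(0)$. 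I would also fix a basepoint $x_0\in\mathcal{M}$.

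The core step is to build, for every $i$, a smooth map $\beta_i\colon\mathbb{D}^d\to\mathcal{M}$ whose image contains $\bar B_i$ and which is equal to $x_0$, to infinite order, in a neighbourhood of $\partial\mathbb{D}^d$. I would assemble $\beta_i$ from two ingredients. (i) A radial self-map $R\colon\mathbb{D}^d\to\mathbb{D}^d$ given by $R(v)=\rho(\lvert v\rvert)\,v/\lvert v\rvert$ for $v\neq0$ and $R(0)=0$, where $\rho\colon[0,1]\to[0,1]$ is smooth with $\rho\equiv0$ near $0$ and near $1$ and $\rho\equiv1$ on $[1/4,3/4]$; this $R$ is smooth (it is flatly $0$ near $0$ and near $\partial\mathbb{D}^d$, and a composition of smooth maps elsewhere) and surjective onto $\mathbb{D}^d$, because $\rho$ attains every value in $[0,1]$ while $v\mapsto v/\lvert v\rvert$ attains every direction; hence $\varphi_i\circ R$ is a smooth surjection onto $\bar B_i$ that is constant with value $c_i$ near $\partial\mathbb{D}^d$. (ii) A smooth path $\gamma_i\colon[0,1]\to\mathcal{M}$ from $c_i$ to $x_0$ all of whose derivatives vanish at the two endpoints; such a path exists because a connected smooth manifold is smoothly path-connected, and flat ends are arranged by reparametrising. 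Splicing $\gamma_i$ onto a boundary collar $\{7/8\le\lvert v\rvert\le1\}$ of $\mathbb{D}^d$, while keeping $\varphi_i\circ R$ on $\{\lvert v\rvert\le7/8\}$, yields $\beta_i$; the two formulas glue smoothly because both are locally constant on the seam up to flat corrections, and $\beta_i\equiv x_0$ to infinite order on $\partial\mathbb{D}^d$. By construction $\beta_i(\mathbb{D}^d)=\bar B_i\cup\gamma_i([0,1])\subseteq\mathcal{M}$.

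Then I would assemble $f$. Choose $N$ pairwise-disjoint smoothly embedded closed balls $\Delta_1,\dots,\Delta_N\subset\mathbb{S}^{d}$ with diffeomorphisms $\psi_i\colon\Delta_i\to\mathbb{D}^d$ (for instance, small disjoint closed balls sitting inside one chart of $\mathbb{S}^{d}$), and define $f\colon\mathbb{S}^{d}\to\mathbb{R}^p$ by $f=\beta_i\circ\psi_i$ on $\Delta_i$ and $f\equiv x_0$ on $\mathbb{S}^{d}\setminus\bigcup_i\Delta_i$. This is well defined and globally smooth: near each $\partial\Delta_i$ the first formula extends smoothly (using that $\psi_i$ extends to a diffeomorphism of a slightly larger ball) and equals $x_0$ to infinite order, so it agrees with the constant $x_0$ on the outside. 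Finally $\mathcal{M}=\bigcup_i\bar B_i\subseteq\{x_0\}\cup\bigcup_i\beta_i(\mathbb{D}^d)=f(\mathbb{S}^{d})\subseteq\mathcal{M}$, whence $f(\mathbb{S}^{d})=\mathcal{M}$.

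I do not expect a genuinely hard step: essentially all of the work is the bump-function bookkeeping that keeps $f$ smooth across the three kinds of seam — the origin of $\mathbb{D}^d$ inside $R$, the collar where $\gamma_i$ is spliced in, and the spheres $\partial\Delta_i$ — i.e.\ arranging that all derivatives vanish at the relevant boundary spheres. Connectedness of $\mathcal{M}$ enters only once, in the existence of the smoothing paths $\gamma_i$, and it is clearly indispensable since $f(\mathbb{S}^{d})$ must be connected.
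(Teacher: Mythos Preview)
Your argument is correct: the finite coordinate-ball cover, the radial surjection $R$ with flat ends, the spliced paths $\gamma_i$ to a common basepoint, and the gluing over disjoint disks in $\mathbb{S}^d$ assemble into a smooth surjection $f\colon\mathbb{S}^d\to\mathcal{M}$. The bookkeeping you flag as the only delicate part is indeed routine.

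The paper, however, takes a much shorter route. It invokes an external result (Theorem~5.1 of \emph{Khrulkov--Oseledets}, ``Universality theorems for generative models''), which already supplies a smooth surjection $g\colon\overline{\mathcal{B}_d^1(0)}\to\mathcal{M}$ from the closed $d$-ball onto any compact connected $d$-manifold. The paper then simply precomposes with the coordinate projection $\phi\colon\mathbb{S}^d\to\overline{\mathcal{B}_d^1(0)}$, $\phi(x_1,\dots,x_{d+1})=(x_1,\dots,x_d)$, which is linear (hence smooth) and surjective onto the closed ball, and sets $f=g\circ\phi$. So the paper's proof is essentially two lines, with all of the geometric content deferred to the citation. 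Your construction is in effect a self-contained proof of that cited result (plus the easy reduction from $\mathbb{S}^d$ to $\mathbb{D}^d$), trading brevity for independence from the literature. If you want to match the paper's economy you could keep only the observation that $\mathbb{S}^d$ projects smoothly onto $\mathbb{D}^d$ and cite the ball-to-manifold surjection; conversely, your approach has the advantage of not leaning on a black box.
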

\begin{proof}[Proof.]
	We use  \cite[Theorem 5.1]{khrukov2019universality}, implying the existence of a surjective smooth map $g: \overline{\mathcal{B}_d^1(0)}\rightarrow \mathcal{M}$, where $\overline{\mathcal{B}_d^1(0)}$ is the  (closed $d$-dimensional) unit ball with origin $0,$ i.e. $\overline{\mathcal{B}_d^1(0)}=\{x\in\mathbb{R}^{d}\vert \Vert x\Vert_2\leq1\}.$ Now, we construct a smooth surjective function $\phi:\mathbb{S}^{d}\rightarrow \overline{\mathcal{B}_d^1(0)},$ such that the resulting map $f:=g\circ \phi,$ i.e  $f(x)=g(\phi(x))$ for all $x\in\mathbb{S}^{d}$, fulfills the above stated requirements. 
	
	Let $\phi:\mathbb{S}^{d}\rightarrow \overline{\mathcal{B}_d^1(0)}$ be the projection on the unit ball defined by $\phi(x)=(x_1,\ldots,x_d)^T$ for $x \in \mathbb{S}^{d}.$ This map is smooth because it can be represented in matrix form by $A=\mathrm{diag}(\underbrace{1,\ldots,1}_{d \text{ times}},0),$ such that $\phi(x)=Ax.$ It is also surjective because for each $y\in \overline{\mathcal{B}_d^1(0)}$ the point $x=(y_1,\ldots,y_p,(1-\sum_{i=1}^p y_i^2)^{1/2})$  fulfills $\phi(x)=y$ and is contained in $\mathbb{S}^{d}$ since $\Vert x\Vert_2=1.$ 
	
	Thus, the map $f:=\phi\circ g$   is smooth and surjective since it is a composition of smooth and surjective maps.
\end{proof}

 Let $\mathcal{Z}$  be the support of a $d+1$-dimensional Gaussian of the form $\mathcal{N}(0,\mathbb{I}_{d+1})$. Hence, the map $g:\mathcal{Z}\rightarrow \mathcal{M},\,g(x)=f\Big(\frac{x}{\Vert x\Vert_2}\Big),$  where $f$ is defined as in Lemma \ref{lem:mapsphere}, is smooth and $g(\mathcal{Z})=\mathcal{M},$ since $ \frac{x}{\Vert x\Vert_2} \in \mathbb{S}^{d}.$

According to the Universal Approximation Theorem by \cite{Cybenko89,Hornik91} the map defined in Lemma \ref{lem:mapsphere} can be approximated by a fully-connected neural network arbitrarily well. This observation is translated into the more general case of disconnected manifolds in Theorem \ref{theo:uni2}. In this setting,  the approximation error is measured by means of the Hausdorff distance $d_H$ \cite[p. 280]{munkres2017topology}, which is defined by \[d_H(X,Y)=\max\{\,\sup_{x \in X} \inf_{y \in Y} m(x,y),\, \sup_{y \in Y} \inf_{x \in X} m(x,y)\,\},\] where $m$ is a well-defined metric on $\mathbb{R}^d$ and  $X, Y\subseteq \mathbb{R}^d$. Thus, we aim to find a network $G$ such that the value for  $d_H(G(\mathcal{Z}),\mathcal{X})$ is kept to be low. 
\begin{theo}
	\label{theo:uni2}
	Let $\mathcal{X}=\bigcup_{i=1}^K\mathcal{X}_i \subset \mathbb{R}^p,\,p\geq d$ be a disconnected union of  compact connected $d$-dimensional manifolds. Then for every $\epsilon>0$ and every nonconstant, bounded, continuous activation function $\phi:\mathbb{R}\rightarrow \mathbb{R}$, there exists  a fully connected neural network $G:\mathbb{R}^{d+1}\rightarrow \mathbb{R}^p$  with activation function $\phi$ such that the following is fulfilled. 
	
	There exists a collection $\{S_i\}_{i=1}^K$ of disjoint $d+1$-dimensional compact annuli such that for all $i \in\{1,\ldots, K\}$
	\begin{align*}
	d_H(G(S_i),\mathcal{X}_i)<\epsilon.
	\end{align*}
\end{theo}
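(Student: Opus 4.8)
The plan is to reduce the disconnected case to $K$ independent applications of Lemma~\ref{lem:mapsphere}, glue the resulting smooth surjections together on $K$ well-separated annuli, and then invoke the Universal Approximation Theorem to replace this (continuous) piecewise map by a single fully connected network. Concretely, for each $i\in\{1,\dots,K\}$ apply Lemma~\ref{lem:mapsphere} to the compact connected $d$-manifold $\mathcal{X}_i\subset\mathbb{R}^p$ to get a smooth surjection $f_i:\mathbb{S}^{d}\to\mathcal{X}_i$. Choose centers $c_1,\dots,c_K\in\mathbb{R}^{d+1}$ and radii $0<a<b$ such that the closed annuli $S_i=\{x\in\mathbb{R}^{d+1}:a\le\Vert x-c_i\Vert_2\le b\}$ are pairwise disjoint; this only requires $\Vert c_i-c_j\Vert_2>2b$ for $i\neq j$. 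On each $S_i$ define $H(x)=f_i\!\left(\frac{x-c_i}{\Vert x-c_i\Vert_2}\right)$, exactly the radial-projection construction from the remark following Lemma~\ref{lem:mapsphere}. Since $S_i$ avoids $c_i$, each piece is smooth, and since the radial projection of the annulus $S_i$ onto the unit sphere around $c_i$ is surjective, we get $H(S_i)=f_i(\mathbb{S}^{d})=\mathcal{X}_i$.

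Because the $S_i$ are pairwise disjoint compact sets, the piecewise definition gives a continuous (indeed smooth) map $H:\bigcup_{i=1}^K S_i\to\mathbb{R}^p$ on the compact set $\bigcup_i S_i$ (if one prefers a formulation of the approximation theorem stated on a cube or ball, extend $H$ coordinatewise via the Tietze extension theorem to a closed ball $B\supseteq\bigcup_i S_i$). Fix any $\delta\in(0,\epsilon)$. By the Universal Approximation Theorem of \cite{Cybenko89,Hornik91}, since $\phi$ is nonconstant, bounded and continuous, single-hidden-layer networks with activation $\phi$ are dense in the continuous functions on this compact domain; applying this to each of the $p$ output coordinates and stacking yields a single fully connected neural network $G:\mathbb{R}^{d+1}\to\mathbb{R}^p$ with activation $\phi$ satisfying $\sup_{x\in S_i}\Vert G(x)-H(x)\Vert_2<\delta$ for every $i$.

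It remains to estimate $d_H(G(S_i),\mathcal{X}_i)$ for each fixed $i$. For any $x\in S_i$ we have $H(x)\in\mathcal{X}_i$, so $\inf_{y\in\mathcal{X}_i}\Vert G(x)-y\Vert_2\le\Vert G(x)-H(x)\Vert_2<\delta$; taking the supremum over $x\in S_i$ bounds the first term of the Hausdorff distance by $\delta$. Conversely, given $y\in\mathcal{X}_i$, surjectivity of $H|_{S_i}$ provides $x\in S_i$ with $H(x)=y$, so $\inf_{x'\in S_i}\Vert G(x')-y\Vert_2\le\Vert G(x)-y\Vert_2=\Vert G(x)-H(x)\Vert_2<\delta$; taking the supremum over $y\in\mathcal{X}_i$ bounds the second term by $\delta$. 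Hence $d_H(G(S_i),\mathcal{X}_i)\le\delta<\epsilon$, which is the claim.

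I expect the real content to sit entirely in Lemma~\ref{lem:mapsphere} (already available) plus the observation that a single network can realize the apparent ``multi-branch'' behavior for free: the approximation theorem handles a continuous target on one compact domain regardless of whether that domain is connected, so no explicit construction of disconnected network outputs is needed. The only points that need care are (i) picking the annuli genuinely disjoint so that the piecewise $H$ is well defined and continuous, (ii) making sure the domain to which the approximation theorem is applied is compact (hence the passage to $\bigcup_i S_i$ or a ball before approximating, since uniform control is only available on compacta), and (iii) controlling \emph{both} directions of the Hausdorff distance --- where the ``every point of $\mathcal{X}_i$ is approximated'' direction is precisely what the surjectivity guaranteed by Lemma~\ref{lem:mapsphere} provides.
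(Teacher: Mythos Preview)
Your proof is correct and follows the same overall skeleton as the paper's: apply Lemma~\ref{lem:mapsphere} on each component, place the $K$ resulting maps on disjoint annuli, and invoke the Universal Approximation Theorem on the compact union. The execution differs, however, and yours is the more economical one. The paper chooses the centers $c_i$ specifically as vertices of the cube $[-2\sqrt{d},2\sqrt{d}]^{d+1}$ (so that the annuli coincide with the high-probability shells of Gaussians $\mathcal{N}(v_{(i)},\mathbb{I}_{d+1})$ via the Gaussian Annulus Theorem), then defines the piecewise map $f=\sum_i f_i\,\mathbbm{1}_{S_i}$ and \emph{mollifies} it against a standard bump function to obtain a globally continuous $f_\rho$, carefully checking that the enlarged supports $\overline{S_i+\mathcal{B}^\rho_{d+1}(0)}$ remain pairwise disjoint before invoking the approximation result. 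You bypass the mollification entirely by noting that a map defined piecewise on pairwise disjoint compact sets is already continuous on their union, and then (optionally) use Tietze to extend to a ball; you also spell out both directions of the Hausdorff bound, which the paper leaves implicit in its citation of \cite{khrukov2019universality}. What the paper's extra structure buys is the explicit link between the annuli and Gaussian latent clusters that motivates the MMGAN initialization heuristic; what your route buys is a shorter, self-contained argument for the theorem as stated.
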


\begin{proof}[Proof.]
	The collection $\{S_i\}_{i=1}^K$ is constructed explicitly. Let $D=[-2\sqrt{d},2\sqrt{d}]^{d+1}$ be a $d+1$-dimensional cube and $\{v_1,\ldots,v_{2^{d+1}}\}$ be the set of all vertices, i.e. for all $ i\in\{1,\ldots, 2^{d+1}\}$, $v_i\in\{-2\sqrt{d},2\sqrt{d}\}^{d+1}$. We choose arbitrarily $K$ vertices  $\{v_{(1)},\ldots,v_{(K)}\}\subseteq \{v_1,\ldots,v_{2^{d+1}}\}$ and initialize $K$ Gaussians with mean $v_{(i)}$ and covariance $\mathbb{I}_{d+1}$ for $i \in \{1,\ldots,K\}$. Now, by means of the Gaussian Annulus Theorem (defined as in \cite[ Lemma $2.4$]{john2008}), the required sets $S_i$ are defined as annuli, i.e. $S_i=\{x\in\mathbb{R}^{d+1}\vert \sqrt{d}-\delta\leq \Vert x-v_{(i)} \Vert_2\leq \sqrt{d}+\delta \}$, for some $0<\delta< \sqrt{d}$,  where every $S_i$ contains the support of the $i$-th Gaussian  up to a fraction of $\frac{4}{\delta^2}e^{-\delta^2/4}$. Thus the initialized annuli form a union of disjoint compact $d+1$-connected manifolds.
	
	Lemma \ref{lem:mapsphere} conveys that for every given manifold $\mathcal{X}_i,$ there exist a smooth surjective function $f_i$, such that $f_i(S_i) = \mathcal{X}_i$. Thus, we obtain a collection of functions $\{f_i\}_{i=1}^K.$ Let $f:\mathbb{R}^{d+1}\rightarrow \mathbb{R}^{d}$ be of the form $f(x)=\sum_{i=1}^{K}f_i(x)\mathbbm{1}_{S_i}(x),$ where $\mathbbm{1}_{S_i}(x)$ is the indicator function defined on $S_i,$ i.e. $\mathbbm{1}_{S_i}(x)=1$ for $x\in S_i$ and 0 otherwise. Next, we  construct a function $f_{\rho}$ which is an approximation of $f$ via smoothing by convolution with a suitable function, e.g. the standard mollifier, defined by:
	
	\begin{align*}
	\eta_{\rho}&:\mathbb{R}^{d+1}\rightarrow \mathbb{R}, \\ \eta_{\rho}(x)&= \begin{dcases} 
	\frac{\alpha}{\rho^p}\exp\Bigg(-\frac{\rho^2}{\rho^2-\Vert x\Vert_2^2}\Bigg) & x\in \mathcal{B}^{\rho}_{d+1}(0)\\
	0 & \text{otherwise}
	\end{dcases},
	\end{align*}
	where $\rho>0$ and $\alpha$ is chosen, such that
	\begin{align*}
	\int_{\mathbb{R}^{d+1}} \eta_{\rho}(x)\text{d}x=1,
	\end{align*}
	and $\mathcal{B}^{\rho}_{d+1}(0)=\{x\in\mathbb{R}^{d+1}\vert \Vert x\Vert_2 <\rho\}$.
	
	Let $\Omega=\bigcup_i^K S_i\setminus\partial S_i$, where $\partial S_i=\{x\in\mathbb{R}^{d+1}\vert \Vert x-v_{(i)} \Vert_2 \in\{ \sqrt{d}-\delta, \sqrt{d}+\delta\} \}.$ Hence, $\Omega$ is a union of open bounded sets. Thus, the resulting convolved function $f_{\rho}$, defined by
	\begin{align*}
	f_{\rho}(x)&=f*\eta_{\rho}(x)=\int_{\Omega}\eta_{\rho}(y) f(x-y)\text{d}y\\&= \sum_{i=1}^{K}\int_{S_i\setminus\partial S_i}\eta_{\rho}(y) f_i(x-y)\mathbbm{1}_{S_i}(x-y)\text{d}y,
	\end{align*} 
	is continuous (e.g. \cite[Theorem $9.1.5$]{heil08real}) and it holds
	\begin{align*}
	\operatorname{supp}(f_{\rho})\subset \overline{\bigcup_{i\leq K} S_i+\mathcal{B}^{\rho}_{d+1}(0)}=\bigcup_{i\leq K} \overline{S_i+\mathcal{B}^{\rho}_{d+1}(0)}=:S,
	\end{align*}
	where  $S_i+\mathcal{B}^{\rho}_{d+1}(0)=\{x\in\mathbb{R}^{d+1}\vert \exists s\in S_i, \exists b\in \mathcal{B}^{\rho}_{d+1}(0),\,x=s+b\}$. We choose $\rho<\sqrt{d}-\delta$ such that the compact sets  $\overline{ S_i+\mathcal{B}^{\rho}_{d+1}(0)}$ become separable, in the sense that for all $i,j\in\{1,\ldots,K\}$, $\overline{ S_i+\mathcal{B}^{\rho}_{d+1}(0)} \cap \overline{ S_j+\mathcal{B}^{\rho}_{d+1}(0)}= \emptyset.$ For all $i\in\{1,\ldots,K\}$, $ s\in S_i$ and $b\in \mathcal{B}^{\rho}_{d+1}(0)$ it holds $\Vert s+b-v_{(i)}\Vert_2<2\sqrt{d}$.
 Recall that the means $v_{(i)},v_{(j)}$ for all $i,j\in\{1,\ldots,K\}$ are the vertices of $D$ defined above.  Moreover,
 \begin{align*}
 4\sqrt{d}&\leq \Vert v_{(i)} -v_{(j)}\Vert_2 \leq \Vert v_{(i)} -v_{(j)} - (s+b)+(s+b)\Vert_2\\
 &\leq \Vert v_{(i)} - (s+b)\Vert_2 +\Vert v_{(j)}-(s+b)\Vert_2\\&< 2\sqrt{d}+\Vert v_{(j)}-(s+b)\Vert_2.
 \end{align*}
 It follows that $\Vert v_{(j)}-(s+b)\Vert_2\geq 2\sqrt{d}>\sqrt{d}+\delta+\rho$ and,  therefore, $s+b \notin S_j+\mathcal{B}^{\rho}_{d+1}(0).$

	Now, define a cube $Z:=[-R,R]^{d+1}\subset \mathbb{R}^{d+1},$ where $R$ is chosen such that $S\subseteq Z.$ Thus, the function $f_{\rho}:Z\rightarrow\mathbb{R}^d$, $f_{\rho}(S_i)=\mathcal{X}_i$ for all $i\in\{1,\ldots,K\}$, fulfills the requirements of \cite[Theorem 5.1]{khrukov2019universality}. Therefore,  for all $\epsilon>0$ a neural network $G$ exists, such that for all $i \in\{1,\ldots,K\}$, $d_H(G(S_i),\mathcal{X}_i)<\epsilon$.
\end{proof}

Theorem \ref{theo:uni2}  gives theoretical guarantees for the existence of a generator $G$ and a disconnected latent space such that $G$ approximates the real data manifolds with small error. However, this holds only if the dimension of $\mathcal{X}$ is known, which is impossible to estimate in every real life application. Nevertheless this result and the discussion above provide a profound justification of the latent space choice. 

\subsection{Cluster Initialization}
To generate a collection of disjoint Gaussian clusters in a high-dimensional space for initializing the model, we propose the following heuristic. It is based on the idea of the annuli construction suggested in the proof of Theorem \ref{theo:uni2}. For this reason, consider the $d$-dimensional cube $[-1,1]^d,$ where the number of vertices equals $2^d.$  Let $V=\{v_1,\ldots,v_{M}\}$ with $M=2^d$ is the set of all vertices, i.e. for each $i\leq M,$ $v_i\in \{-1,1\}^d.$ We randomly sample a subset from $V$ of length $K$ and initialize the means of the Gaussian clusters. Here, we assume that the number of clusters $K$ does not exceed $2^d$, i.e. $K\leq 2^d.$ All the standard deviations $\sigma_k$ are initialized with values of at most 1.
Moreover, to avoid narrow Gaussian clusters with very small $\sigma_k$ for $k\in\{1,\ldots,K\}$, we set a lower bound of $0.1$ for the standard deviations.
 
\section{Experiments}
We fix the network structure, parameter initialization and use benchmark data to achieve a fair comparison between our new approach and compared existing models (GMVAE \cite{DilokthanakulMG16}, AAE \cite{makhzani2015aae}, ClusterGAN \cite{Mukherjee2018clustergan}). All models are trained with the Adam optimization method \cite{kingmaB14adam}, where $\beta_1=0.5$ and $\beta_2=0.99.$ The hyperparameter $\alpha$ is set to $1$ for all experiment. MMGAN and ClusterGAN generator and discriminator use the same architectures as AAE's decoder and encoder. Moreover, MMGAN/ClusterGAN discriminator and encoder have the same structure. For GMVAE we used an existing implementation\footnote{\url{https://github.com/psanch21/VAE-GMVAE}}. We evaluated our method  on a synthetic and three gray scale data sets (Moon \cite{scikit-learn}, MNIST \cite{lecun-mnisthandwrittendigit-2010}, Fashion MNIST \cite{fmnist} and Coil-20 \cite{Nene96columbiaobject}). For synthetic data, all model components have two dense layers with $128$ units per layer and ReLU activation function while for gray-scale images, MMGAN, ClusterGAN  and AAE nets have three CNN layers. The used activation function for the MMGAN/ClusterGAN discriminator and encoder is LeakyReLU.

 \begin{figure}[t]
	
	\begin{subfigure}[b]{0.31\linewidth}
		\centering
		\includegraphics[width=0.99\linewidth]{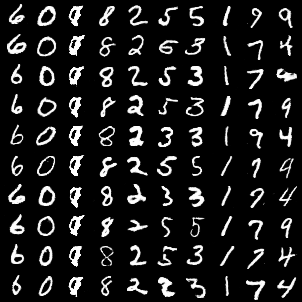}%
		\caption{SGAN}
		\label{fig:mnistsgan}
	\end{subfigure}
	\begin{subfigure}[b]{0.31\linewidth}
		\centering
		\includegraphics[width=0.99\linewidth]{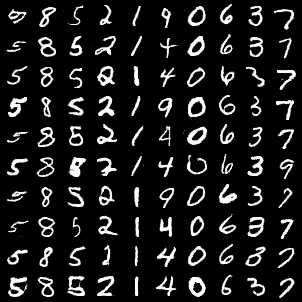}%
		\caption{RSGAN}
		\label{fig:mnistrsgan}
	\end{subfigure}
	\begin{subfigure}[b]{0.31\linewidth}
		\centering
		\includegraphics[width=0.99\linewidth]{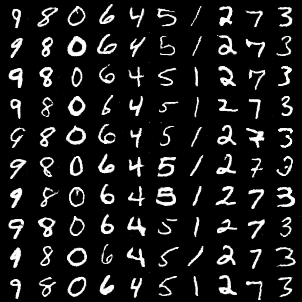}%
		\caption{cRaSGAN}
		\label{fig:mnistcrasgan}
	\end{subfigure}
	\caption[MNIST MMGAN Generated Output]{MNIST MMGAN fake images. Each column represents one cluster.}
	\label{fig:mnistmmgan}
\end{figure}
 

To study MMGAN functionalities, we conduct several experiments, where the model is trained on benchmarking datasets, and compared to other three competitive models. Table \ref{tab:moonres} provides an overview of the numerical results regarding the clustering performance on test data. We can conclude that MMGAN especially the cRaSGAN based one outperforms the other competitors in terms of all used evaluation measures: normalized mutual information (NMI), adjusted rank index (ARI), purity (ACC).
\begin{table}[t]
	\centering
	\resizebox{\columnwidth}{!}{%
		\begin{tabular}{|l|l|l|l|l|}
		\hline
		\textbf{Dataset}  & \textbf{Model}     & \textbf{NMI} & \textbf{ARI}& \textbf{ACC} \\ \hline
		\multirow{3}{*}{\begin{tabular}[c]{@{}l@{}}Moon\\ Data\end{tabular}} & 
		MMGAN (SGAN) & 0.64 & 0.74 & 0.93\\ \cline{2-5}  & 
		MMGAN (RSGAN) & \textbf{0.76} &\textbf{ 0.81} & \textbf{0.95}\\ \cline{2-5}  & 
		MMGAN (cRaSGAN)&  0.60& 0.70 & 0.92
		\\ \cline{2-5}  
		&GMVAE & 0.39 & 0.47 & 0.84\\  \cline{2-5} 
			&AAE & 0.48 & 0.57 & 0.86\\  \cline{2-5} 
		&ClusterGAN & 0.36 &0.45 & 0.83\\  \hline
		\multirow{3}{*}{MNIST}   
		& MMGAN (SGAN)  &       0.81       &       0.74 & 0.81       \\ \cline{2-5}                                                 
		& MMGAN (RSGAN)  &       0.86       &       0.80 & 0.88       \\ \cline{2-5} 
		& MMGAN (cRaSGAN) &       \textbf{0.92}     &      \textbf{0.94}  & \textbf{0.97} \\ \cline{2-5}    
		& GMVAE         &      0.68       &       0.57 & 0.70       \\ \cline{2-5} 
			&AAE & 0.73 & 0.64 & 0.76\\  \cline{2-5} 
		& ClusterGAN         &      0.72        &       0.64 & 0.74      \\ 
		\hline
		\multirow{3}{*}{\begin{tabular}[c]{@{}l@{}}Fashion\\ MNIST\end{tabular}} 
		
		& MMGAN (SGAN)  &    0.73   & 0.60 &0.74           \\ \cline{2-5} 
		& MMGAN (RSGAN) &      0.67       &      0.53 & 0.69 \\ \cline{2-5} 
		& MMGAN (cRaSGAN) &      \textbf{0.68}       &      \textbf{0.56} & \textbf{0.70} \\ \cline{2-5} 
		& GMVAE &      0.51      &      0.32 &0.48 \\ \cline{2-5} 
			&AAE & 0.52 & 0.31 & 0.50\\  \cline{2-5} 
		& ClusterGAN         &      0.30         & 0.18 & 0.42              \\    \hline
		\multirow{3}{*}{Coil-20} & 
		MMGAN (SGAN) & 0.71 & 0.55 & 0.67\\ \cline{2-5}  & 
		MMGAN (RSGAN) & 0.76 & 0.58 & 0.67\\ \cline{2-5}  & 
		MMGAN (cRaSGAN)&  \textbf{0.80}& \textbf{0.60} &\textbf{0.72}
		\\ \cline{2-5}  
		&GMVAE & 0.65 & 0.29 & 0.44\\  \cline{2-5} 
			&AAE & 0.75 & 0.56 & 0.61\\  \cline{2-5} 
		&ClusterGAN & 0.62 &0.38 & 0.45\\  \hline
		
	\end{tabular}}
	\caption[Benchmarking Results]{Comparison between MMGAN, GMVAE, AAE and ClusterGAN in terms of the clustering measures NMI, ARI and ACC.}
	\label{tab:moonres}
\end{table}

Figure \ref{fig:mnistmmgan} illustrates the generated output of SGAN, RSGAN, cRaSGAN - based MMGAN trained on the MNIST data. In this experiment, we can come to the  conclusion that the relativistic approach seems to be more stable than the SGAN one. For instance, it can be seen in Figure \ref{fig:mnistsgan} that the third cluster collapses and clusters $6$ and $7$ generate the same images. This is illustrated in the $6$th and $7$th column of Figure \ref{fig:mnistsgan}. 

In Figure \ref{fig:mnistmmganmean},  the heatmaps visualize cosine similarity between the cluster means, which is defined by $1-\cos(\mu_i,\mu_j)$ for two means $\mu_i$ and $\mu_j$, where $i,j\in\{1,\ldots,K\}$. The measure is bounded in $[-1,1]$. Two clusters are close to each other when the cosine similarity measure is around $1$. We also keep in mind that in high dimensional spaces two randomly sampled vectors are almost surely orthogonal.  
As we have pointed out earlier, clusters $6$ and $7$ in the SGAN based model generate the same mode. It can be expected that the cluster means form a small angle. However, Figure \ref{fig:mnistsganmean} does not support this hypothesis. The computed cosine measure is around $0$. The other two heatmaps (see Figures \ref{fig:mnistrsganmean} and \ref{fig:mnistcrasganmean}) also do not reveal any pattern between the similarity measurements and generated cluster output. For example, the digits $4,7$ and $9$  are often associated with one cluster (see column $6$ and $10$ in \ref{fig:mnistrsgan}). However, according to Figures \ref{fig:mnistrsganmean} and \ref{fig:mnistcrasganmean} the cluster means are not very similar. Herewith, we conclude that the obtained clusters do not allude to further structure in the latent space, i.e. when two latent clusters resemble in $\mathcal{Z},$ the corresponding generated output need not be similar in $\mathcal{X}.$
\begin{figure}[t]
	\begin{subfigure}[b]{0.31\linewidth}
		\centering
			\includegraphics[width=0.99\linewidth]{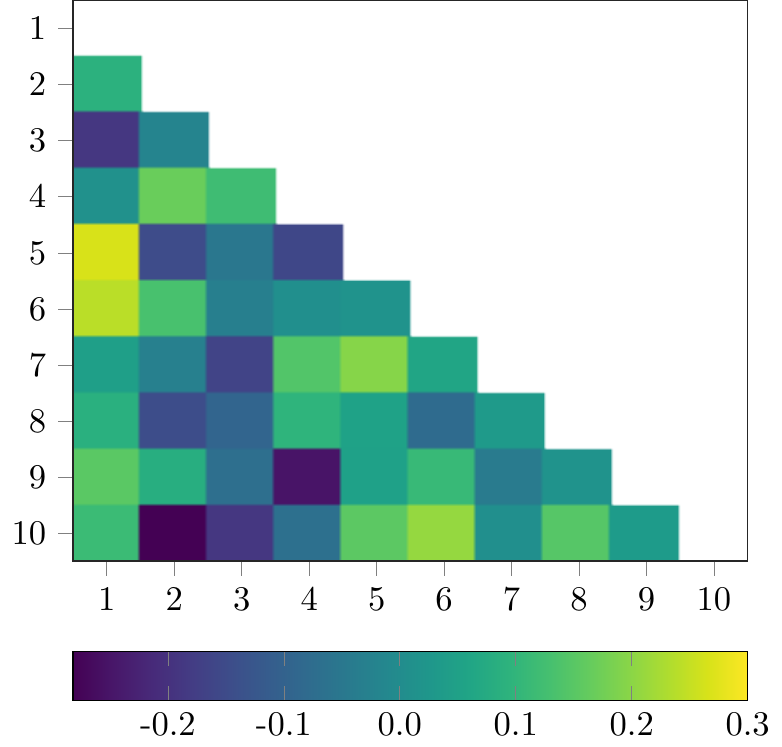}%
		\caption{SGAN}
		\label{fig:mnistsganmean}
	\end{subfigure}
	\begin{subfigure}[b]{0.31\linewidth}
		\centering
			\includegraphics[width=0.99\linewidth]{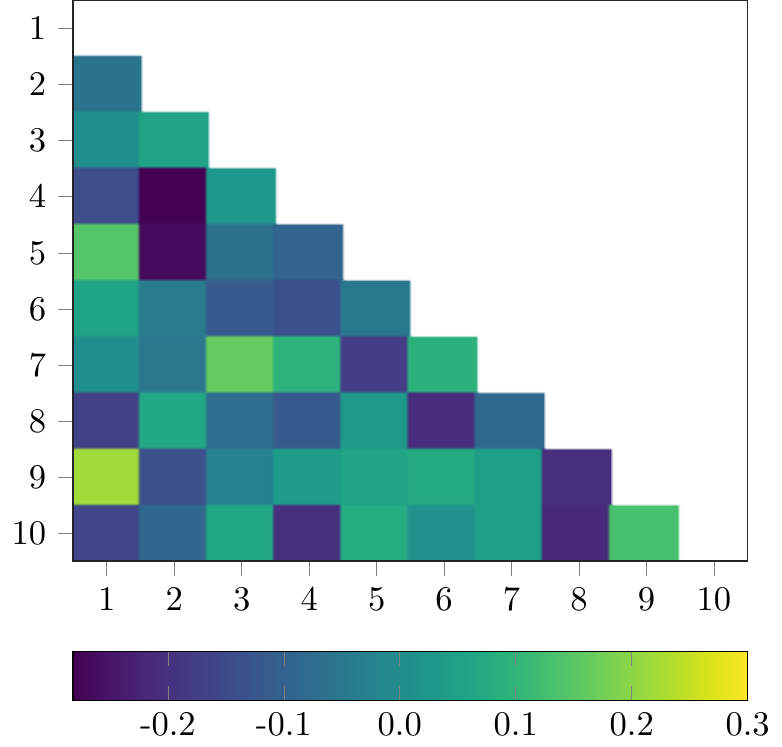}%
		\caption{RSGAN}
		\label{fig:mnistrsganmean}
	\end{subfigure}
	\begin{subfigure}[b]{0.31\linewidth}
		\centering
			\includegraphics[width=0.99\linewidth]{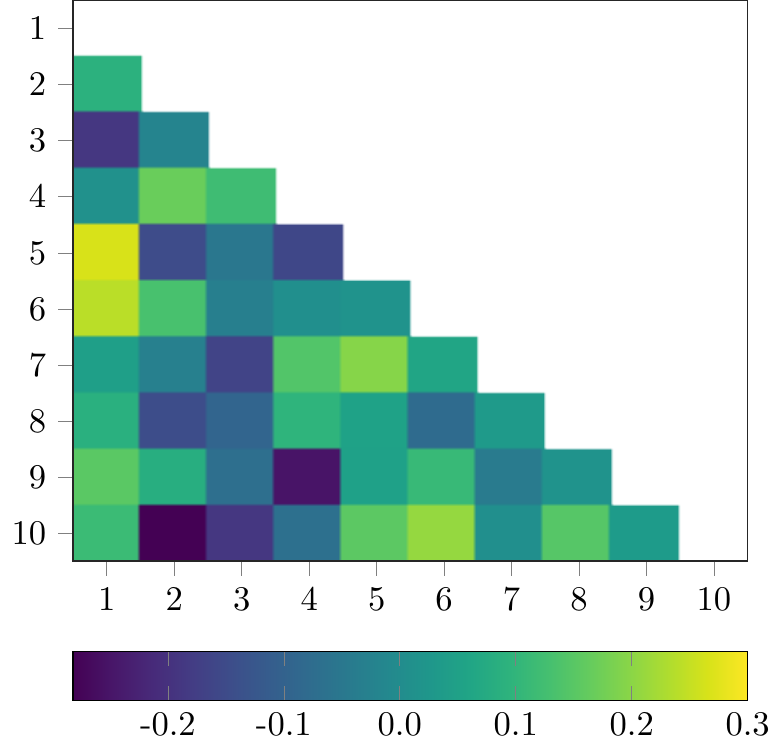}%
		\caption{cRaSGAN}
		\label{fig:mnistcrasganmean}
	\end{subfigure}
	\caption[MNIST MMGAN Mean Cosine Similarity]{MNIST MMGAN Mean Cosine Similarity. For every model a cosine similarity is computed between the cluster means.  The axes represent the clusters and the boxes refer to the pairwise distance between the means, such that high values are indicated by the yellow scale and low values by the blue one.}
	\label{fig:mnistmmganmean}
\end{figure}

In the next experiment,  we examine the effect of the pairings strategy on the MMGAN encoder performance. For each dataset (MNIST, Fashion MNIST, Coil-20) five MMGANs are trained using the cRaSGAN adversarial loss. For each dataset, the trained models are evaluated with respect to the clustering measures (NMI, ARI, ACC), which are summarized by their mean and standard deviations, shown in Table \ref{tab:pair}. 
\begin{table}[t]
	\centering
	\resizebox{\columnwidth}{!}{%
	\begin{tabular}{|l|c|l|l|l|}
		\hline
		\textbf{Dataset}  & \textbf{Model}     & \textbf{NMI} & \textbf{ARI}& \textbf{ACC} \\ \hline
		
		\multirow{2}{*}{MNIST}   
		
		& MMGAN  &        $\mathbf{0.91\pm 0.03}$     &      $\mathbf{0.90\pm 0.05}$  & $\mathbf{0.94\pm 0.04}$    \\ \cline{2-5} 
		& no pairings &       $0.92\pm 0.01$    &      $0.93 \pm 0.01$  & $0.96\pm 0.00$ \\   \hline

		\multirow{2}{*}{\begin{tabular}[c]{@{}l@{}}Fashion\\ MNIST\end{tabular}}

		& MMGAN &      $\mathbf{0.65\pm 0.01}$       &      $\mathbf{0.52\pm 0.06}$ & $\mathbf{0.66\pm 0.03}$ \\ \cline{2-5} 
		&no pairings &      $0.63\pm 0.01$      &      $0.49\pm 0.01$ &$0.63\pm 0.01$ \\  \hline
		
		\multirow{2}{*}{Coil-20} & 
		
		MMGAN&  $\mathbf{0.80\pm 0.01}$& $\mathbf{0.62\pm 0.02}$ & $\mathbf{0.73 \pm 0.03}$
		\\ \cline{2-5}  
		&no pairings& $0.70\pm 0.02$ & $0.48\pm 0.04$ & $0.64\pm 0.02$\\   \hline

	\end{tabular}}
	\caption[Pairing Strategy Verification for the Clustering Performance]{Comparison between the MMGAN framework and the one without utilizing the pairing strategy in terms of encoder clustering performance.  }
	\label{tab:pair}
\end{table}
Analogously to the setting above, we train MMGANs without using the pairing strategy, i.e. the  input noise is sampled randomly from the Gaussian mixture model. Thus, the formed pairings $(x_r,x_f)$ do not necessarily refer to the same cluster.  Table \ref{tab:pair} provides the clustering summary statistics for this type of model, as well.

\begin{figure}[t]
	\centering
	\includegraphics[scale=0.8]{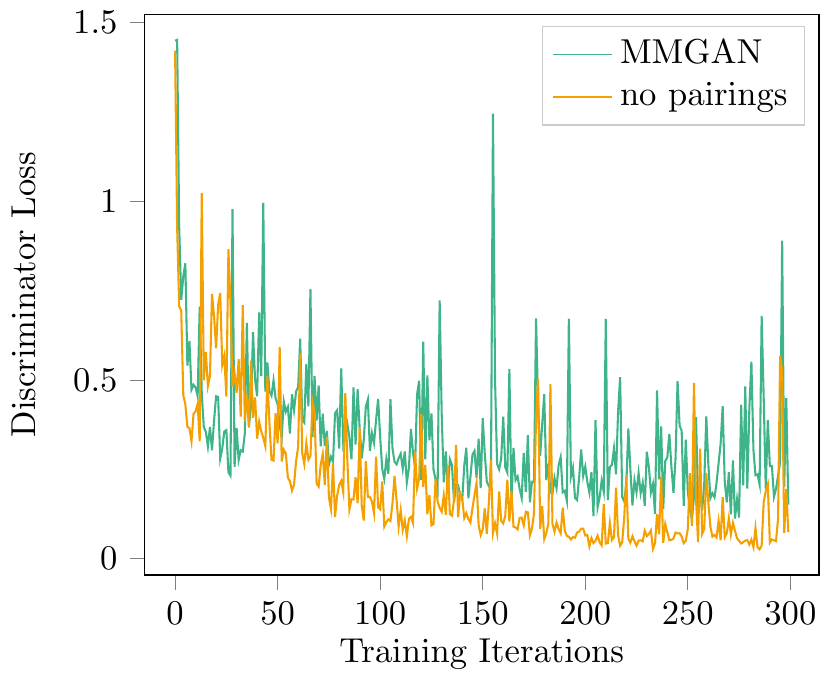}%
	\caption[Coil-20 MMGAN Discriminator Loss]{Here, we compare the discriminator loss of the MMGAN model and the one trained without the pairing strategy. Both models are fitted on the Coil-20 dataset.}
	\label{fig:pairs}
\end{figure}
By considering Table \ref{tab:pair}, we can conclude that for both the MNIST and Fashion MNIST data the two type of models show similar results in terms of clustering performance. Moreover, in the Coil-20 dataset case, our proposed MMGAN framework outperforms the other one. This observation is also supported by Figure \ref{fig:pairs}, which illustrates the discriminator loss over the training iterations for both type of models. It can be concluded that the random strategy pushes the discriminator loss faster to $0$ than the pairings one.

 \begin{figure}[t]
 	\begin{subfigure}[b]{0.49\linewidth}
 		\centering
 		\includegraphics[width=0.9\linewidth]{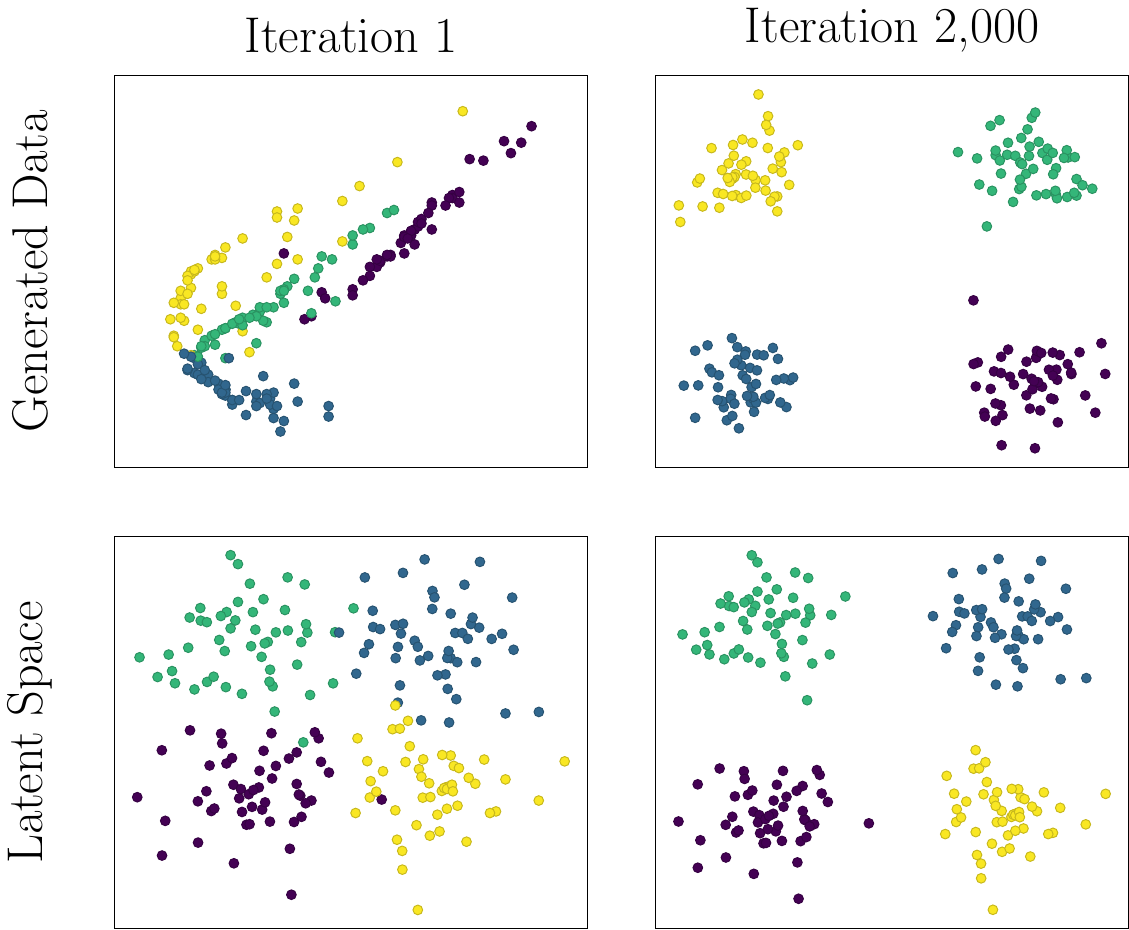}
 		\caption{Disjoint Initialization}
 		\label{fig:syntheu}
 	\end{subfigure}
 	\begin{subfigure}[b]{0.49\linewidth}
 		\centering
 		\includegraphics[width=0.9\linewidth]{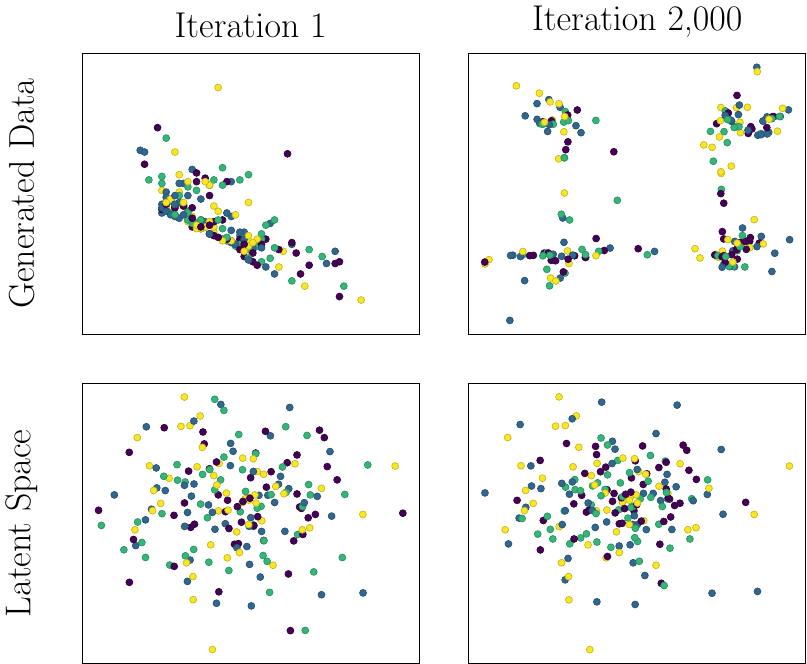}
 		\caption{Overlapping Initialization}
 		\label{fig:syntrand}
 	\end{subfigure}
 	\caption[Parameter Initialization]{The first row shows a sample of the generated data points over the training iterations, i.e. after the first and $2,000$. The second row represents the learned latent space distribution w.r.t the initialization. The coloring refers to the given prior clusters. The model with fixed initialization cannot reconstruct the predefined clusters.}
 	\label{fig:synexp1}
 \end{figure}
In our experiment, depicted in Figure \ref{fig:synexp1}, we trained two MMGANs with different initialization of the cluster means. The first one (see Figure \ref{fig:syntheu}) employs the heuristic explained above, while the second one uses the same starting value for each cluster mean. For both experiments the features standard deviations have starting values $0.5.$ 
The latent codes used for acquiring the data points are fixed over all iterations.
Interestingly, only after the first iteration  a clustered structure can be recognized in the generated data. It can be also seen that the generator manages to reconstruct the initial data distribution and the encoder successfully performs the unsupervised clustering task by achieving maximal evaluation scores. The second row of  Figure \ref{fig:syntheu} shows the latent space parameter learning over time. It can be observed that the cluster specific standard deviations decrease.  Figure \ref{fig:syntrand} similarly to Figure \ref{fig:syntheu} shows the generator behavior over time, yet for overlapping clustering initialization. It indicates that the resulting encoder does not match the prior labeling. Moreover, the sample quality impairs compared to the first MMGAN.
 
\section{Conclusions}
This paper introduced a new model from the VAE-GANs hybrid family, which is adapted for both inference learning and approximating real data distributions with disconnected support. Throughout presenting theoretical and empirical results, we have justified the specific structure of our model. We observe that MMGAN excels in the generative modeling task and successfully clusters the real data in the latent space, regarding the labels in the used datasets. In the conducted experiments, we observed an outstanding performance compared to  other two state-of-the-art models which are related to this field.
\bibliographystyle{aaai} \bibliography{lib}
\end{document}